\newtheorem{lemma}{Lemma}
\newtheorem{theorem}{Theorem}
\newtheorem{definition}{Definition}
\newcommand{\cready}[1]{\textcolor{black}{#1}}
\newcommand{\eg}{\emph{e.g.,}\xspace}
\newcommand{\etal}{\emph{et al.}\xspace}
\newcommand{\ie}{\emph{i.e.,}\xspace}
\definecolor{Gray}{gray}{0.9}
\title{Maximum Class Separation as Inductive Bias\\in One Matrix}
\author{%
  Tejaswi Kasarla
  \\
  University of Amsterdam\\
  \And
  Gertjan J. Burghouts \\
  TNO, Intelligent Imaging\\
  \And
  Max van Spengler\\
  University of Amsterdam\\
  \AND
  Elise van der Pol\\
  Microsoft Research AI4Science\\
  \And 
  Rita Cucchiara\\
  University of Modena \\
  and Reggio Emilia \\
  \And
  Pascal Mettes \\
  University of Amsterdam \\
}
\begin{document}

\maketitle

\begin{abstract}
Maximizing the separation between classes constitutes a well-known inductive bias in machine learning and a pillar of many traditional algorithms. By default, deep networks are not equipped with this inductive bias and therefore many alternative solutions have been proposed through differential optimization. Current approaches tend to optimize classification and separation jointly: aligning inputs with class vectors and separating class vectors angularly. This paper proposes a simple alternative: encoding maximum separation as an inductive bias in the network by adding one fixed matrix multiplication before computing the softmax activations. The main observation behind our approach is that separation does not require optimization but can be solved in closed-form prior to training and plugged into a network. We outline a recursive approach to obtain the matrix consisting of maximally separable vectors for any number of classes, which can be added with negligible engineering effort and computational overhead. Despite its simple nature, this one matrix multiplication provides real impact. We show that our proposal directly boosts classification, long-tailed recognition, out-of-distribution detection, and open-set recognition, from CIFAR to ImageNet. We find empirically that maximum separation works best as a fixed bias; making the matrix learnable adds nothing to the performance. \cready{The closed-form matrices and code to reproduce the experiments are available on github.\footnote{\url{https://github.com/tkasarla/max-separation-as-inductive-bias}}}
\end{abstract}

\section{Introduction}

An inductive bias of a learning algorithm describes a set of assumptions about the target function independent of training data~\cite{mitchell1980need}. Inductive biases play a vital role in the design of machine learning algorithms, consider for example inductive biases for image structures (\eg the convolution operator), symmetries (\eg rotational equivariance), or relational structures (\eg graph layers). Specifically for categorization, an important and long-standing inductive bias is optimal class separation.
Perhaps the most well-known example of the use of this bias is Support Vector Machines, which maximize the margin of the hyperplane between samples of two classes \cite{svm}. Given many possible hyperplanes, the rationale is to select the one that represents the largest margin, \ie separation, known as a maximum-margin classifier. Similarly, boosting algorithms give weights to samples depending on their margin, such as AdaBoost which focuses on low margin samples \cite{adaboost}.
In such algorithms and in many more, separation has been incorporated in their design to 
pull apart the classes, thereby improving 
generalization in classification settings to unseen data.


In 
deep learning-based classification, neural networks are by default not equipped with a maximum separation inductive bias. The cross-entropy loss with a softmax function is a common choice to 
train a classifier, aiming for 
discriminative power. Yet this framework does not explicitly drive maximum separation between the classes. Liu \etal~\cite{liu2018decoupled} observed that networks by themselves try to decouple inter-class differences based on angles and intra-class variations based on norms. This decoupling is however limited out-of-the-box and a wide range of works have investigated optimization strategies to explicitly force classes away from each other. A common approach is to introduce additional losses which incorporate a notion of class margins, see \eg \cite{Li2020oslnet,liu2018learning,liu2016large, liu2017deep} or by including
orthogonality constraints~\cite{Li2020oslnet}. Specifically, hyperspherical uniformity has shown to be an effective way to separate classes, whether it be as additional objectives in a softmax cross-entropy optimization~\cite{lin2020regularizing,liu2018learning,pmlr-v130-liu21d,zhou2022learning} or through hyperspherical prototypes~\cite{mettes2019hyperspherical}. 

This paper provides an alternative view to 
embed 
separation in deep networks with a straightforward solution: encoding maximum separation as an inductive bias in the network architecture itself with one fixed matrix. Rather than treating separation as an optimization problem, we find that the problem can be optimally solved in closed-form. The closed-form solution is given by a recursive algorithm which embeds all class vectors uniformly, such that all are equiangular and with zero mean. The solution holds for any finite number of classes and results in a single matrix that can be added \emph{a priori} as a matrix multiplication on top of any network. The outputs of the proposed embedding are the inner product scores that feed into the softmax function, allowing us to still leverage powerful objectives such as cross-entropy on softmax activations.

We demonstrate that remarkably, adding only one fixed matrix 
in the neural network 
provides compelling improvements across 
various classification tasks. We show that enriching ResNet with a maximum separation inductive bias improves standard classification and especially long-tailed recognition. We also show that our proposal improves methods specifically designed for long-tailed recognition, highlighting its general utility. The results generalize to ImageNet and we show improvement in both shallow and deep networks. The added inductive bias helps to balance classes by design, avoiding the under-representation of rare classes. Beyond classification, we 
illustrate 
that out-of-distribution and open-set recognition benefit from an embedded maximum separation. Our 
general-purpose solution 
comes with no measurable difference in runtime and can be added in one line of code, enabling it to be adopted generally. To highlight the broad applicability of our proposal, we also show how task-specific state-of-the-art approaches for open-set recognition benefit directly when adding the proposed maximum separation.

\section{Closed-Form Maximum Separation between Classes}
Classification constitutes a supervised setting, where we are given a training set of $N$ examples $\{(\mathbf{x}_i, y_i)\}_{i=1}^{N}$, where $\mathbf{x}_i$ denotes input data at index $i$ and $y_i \in \{1,\ldots,C\}$ denotes the corresponding label for one of $C$ classes. Conventionally, this problem can be addressed by feeding inputs through a network, \ie $\mathbf{o}_i = \Phi(\mathbf{x}_i) \in \mathbb{R}^{C}$. The outputs denote class logits, which serve as input for a softmax activation followed by a cross-entropy loss.

Our goal is to embed maximum separation as an inductive bias into any network, so that the network no longer needs to learn this behaviour. We do so by adding a single matrix multiplication on top of the output of network 
$\Phi'(\cdot)$. 
The matrix itself should contain all necessary information about maximum separation between classes. While the general problem of optimally separating arbitrary numbers of classes on arbitrary numbers of dimensions is an open problem~\cite{liu2018learning,mettes2019hyperspherical}, we note that we only need maximum separation of $C$ classes in a particular
output space.
To that end, let $\hat{\mathbf{x}}_i = \Phi'(\mathbf{x}_i) \in \mathbb{R}^{C-1}$ now denote the output of a network to a $(C-1)$-dimensional output space. We add the matrix multiplication to obtain $C$-dimensional class logits: $\mathbf{o}_i = P^T \hat{\mathbf{x}}_i$ with $P \in \mathbb{R}^{(C-1) \times C}$. The matrix $P$ denotes a set of $C$ vectors, one for each class, each of $(C-1)$ dimensions. This matrix is fixed such that the class vectors are separated both uniformly and maximally with respect to their pair-wise angles, which we in turn embed on top of a network. 
Below, we outline the definition and implementation of class vectors on a hypersphere with maximum separation. In what follows, the number of classes is denoted by $k+1 = C$ for notational convenience. Moreover, this definition is restricted to matrices of the form $\mathbb{R}^{k \times (k+1)}$. 

\begin{figure}
    \centering
    \subfloat[Recursive update from 2 to 3 classes.]{\includegraphics[width=0.5\textwidth]{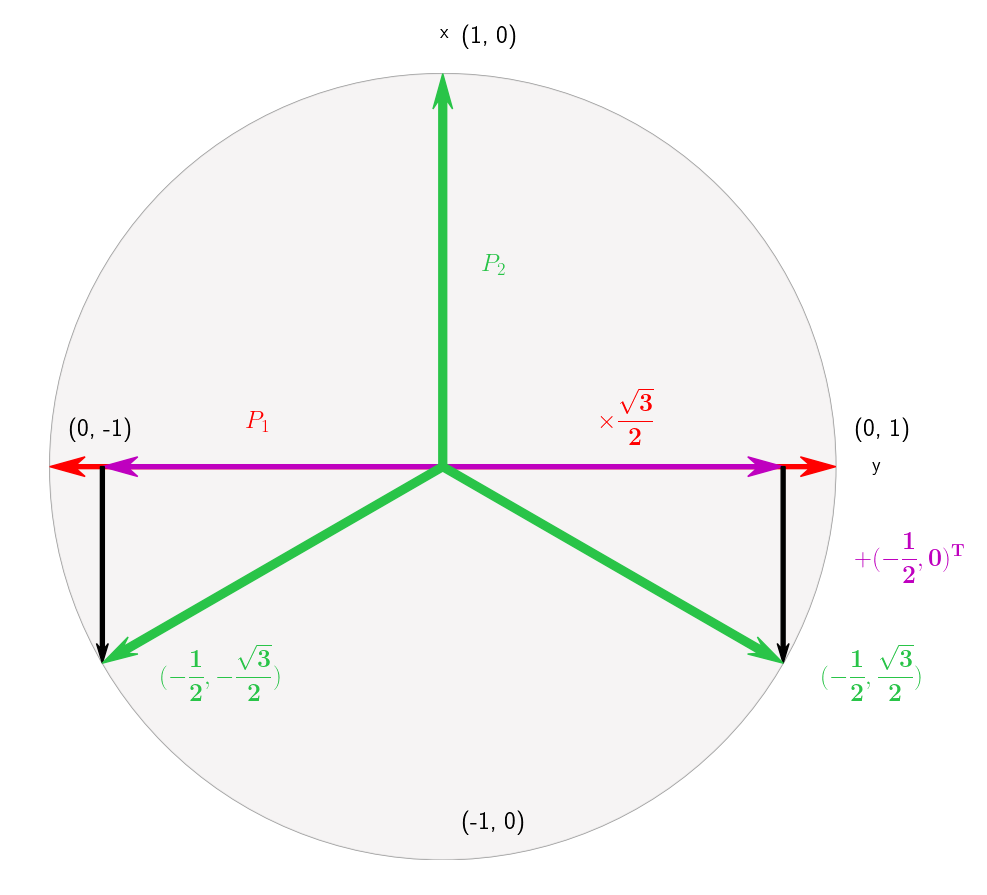}\label{fig:f1}}
  \hfill
  \subfloat[Recursive update from 3 to 4 classes.]{\includegraphics[width=0.5\textwidth]{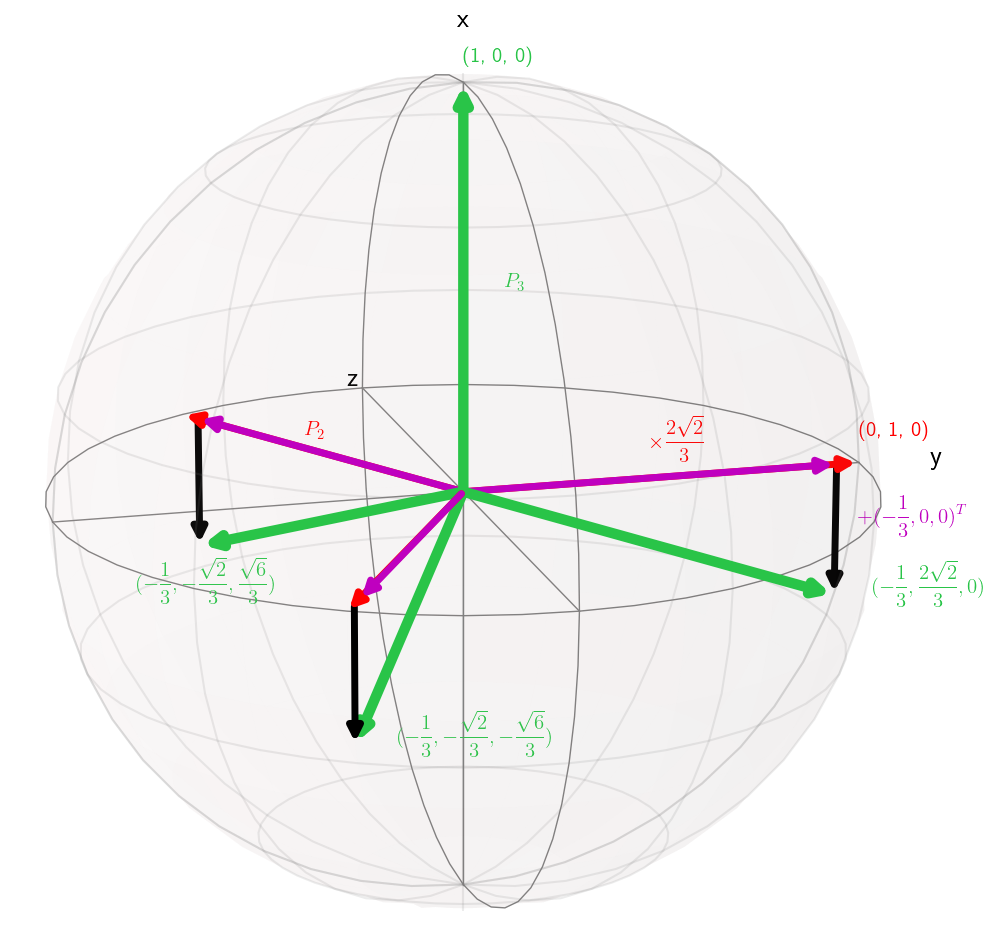}\label{fig:f2}}
    \caption{\textbf{Closed-form separation through recursion.} In (a), we visualize how to obtain maximally separated vectors for three classes on $\mathbb{S}^{1}$ given two class vectors on the line. In (b), we do the same for the step from three to four classes on $\mathbb{S}^{2}$. For any new $(k + 1)$-th class (red), we introduce a new $k$-th dimension and position the new prototype along its axis $(1, 0, \ldots, 0)$. All other prototypes are shrunken by a factor $\sqrt{1 - \frac{1}{k^2}}$ (purple) and repositioned such that their center is placed at $(-\frac{1}{k}, 0, \ldots, 0)$ in the hyperplane perpendicular to the new class (green). Now, the matrices $P_2$ and $P_3$ are the matrices containing the green vectors shown in (a) and (b), respectively, as their columns. This recursive approach ensures that all classes remain of unit norm, with equal pair-wise similarity, and with zero mean.}
    \label{fig:method}
\end{figure}

\noindent
\begin{definition}
\emph{(Maximally separated matrix)}
For $k + 1$ classes, let 
$P_k = [p_0,\ldots,p_{k}]$ 
denote a matrix of $k + 1$ column vectors 
$\{p_i\}_{i = 0}^k \subset \mathbb{S}^{k-1}$, 
such that $\forall_{i,j,k, i \not= j, j \not= k} \langle p_i, p_j \rangle = \langle p_i, p_k \rangle $ and $\sum_{i=0}^{k} p_i = 0$.
\label{def}
\end{definition}

Here, $\langle \cdot, \cdot \rangle$ denotes the cosine similarity. Definition~\ref{def} ensures that all class vectors are as far away from each other as possible based on two requirements. First, the angle between any two class vectors should be the same. Equal pair-wise similarity entails uniformity as any deviation from this setup leads to classes that move closer to each other~\cite{tammes1930origin}. Second, the mean over all class vectors should be the origin. This constraint prevents trivial solutions such as an identical embedding of all classes as well as suboptimal embeddings in terms of separation such as one-hot encodings. In short, the definition prescribes that $k+1$ classes can be optimally separated specifically when using $k$ output dimensions.
To be able to construct such an embedding, we first need to know what the pair-wise angular similarity is for any fixed number of classes:

\noindent
\begin{lemma}
\label{lemma}
For maximally separated matrix $P_k$, 
$\forall_{i, j, i \not= j} \langle p_i, p_j \rangle = -\frac{1}{k}$.
\end{lemma}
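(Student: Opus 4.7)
The plan is to extract the common pairwise cosine similarity directly from the two defining properties of a maximally separated matrix by computing the squared norm of $\sum_i p_i$ in two different ways.

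First, I would name the quantity of interest: let $c := \langle p_i, p_j \rangle$ for any $i \neq j$, which is well-defined because Definition~\ref{def} asserts that all pairwise inner products agree. The goal then reduces to solving a single scalar equation for $c$.

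Second, I would exploit the zero-mean condition. Since $\sum_{i=0}^{k} p_i = 0$, we have $\bigl\|\sum_{i=0}^{k} p_i\bigr\|^2 = 0$. Expanding the squared norm as a double sum gives
\[
0 = \Bigl\|\sum_{i=0}^{k} p_i\Bigr\|^2 = \sum_{i=0}^{k} \|p_i\|^2 + \sum_{\substack{i,j = 0 \\ i \neq j}}^{k} \langle p_i, p_j \rangle.
\]
Each $p_i$ lies on $\mathbb{S}^{k-1}$, so $\|p_i\|^2 = 1$, contributing $(k+1)$ to the first sum. The second sum has $(k+1)k$ terms, all equal to $c$. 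Solving $(k+1) + (k+1)k\cdot c = 0$ yields $c = -\tfrac{1}{k}$, as claimed.

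There is no real obstacle here; the lemma is essentially an algebraic identity that falls out once one notices that the equiangular and zero-sum constraints can be combined through the squared-norm identity. The only thing to be careful about is bookkeeping: confirming that there are $(k+1)$ diagonal terms and $(k+1)k$ off-diagonal terms when summing over an index set of size $k+1$, and that the unit-norm condition from $p_i \in \mathbb{S}^{k-1}$ is used to evaluate the diagonal. No case analysis, induction, or appeal to the recursive construction depicted in Figure~\ref{fig:method} is needed; the lemma depends only on the abstract properties in Definition~\ref{def}.
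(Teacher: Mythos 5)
Your proof is correct and takes essentially the same route as the paper: both extract the common cosine value from the zero-mean condition combined with unit norms and equiangularity. The only cosmetic difference is that you expand $\bigl\|\sum_{i=0}^{k} p_i\bigr\|^2 = 0$ as a full double sum, whereas the paper pairs the zero sum against a single vector via $\langle p_0, \sum_{i=0}^{k} p_i \rangle = 0$; both reduce to the same linear equation yielding $-\tfrac{1}{k}$.
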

\begin{proof}
With $P_k$ maximally separated, we have $\sum_{i=0}^{k} p_i = 0$ and hence:
\begin{equation}
0 = \langle p_0, \sum_{i=0}^{k} p_i \rangle = 1 + \sum_{i=1}^{k} \langle p_0, p_i \rangle = 1 + k \langle p_0, p_1 \rangle,
\end{equation}
since all class vectors are at equal angle to each other. We have 
$\langle p_0, p_1 \rangle = -\frac{1}{k}$ 
and again by extension of equidistance: 
$\forall_{i, j, i \not= j} \langle p_i, p_j \rangle = -\frac{1}{k}$.
\end{proof}
The implication of the lemma above is that an optimal separation of $k+1$ classes on the manifold 
$\mathbb{S}^{k-1}$ 
results in classes being separated beyond orthogonality, a direct result of using the full space and different from classical one-hot encodings, which only span the positive sub-space and therefore provide only orthogonal separation.
The outcome of the lemma allows us to construct $P_k$ recursively\cready{~\cite{urlref,mroueh2012multiclass}}.

For any finite number of classes $k + 1$, the closed-form solution 
for $P_k$ 
is given as:
\begin{align}
P_1&=\begin{pmatrix}1&-1\end{pmatrix} \; \; \in \; \mathbb R^{1\times2}\\
P_k&=\begin{pmatrix}1&-\frac{1}{k}\mathbf1^T\\ \mathbf0&\sqrt{1-\frac1{k^2}}\,P_{k-1}\end{pmatrix} \; \in \; \mathbb R^{k\times(k+1)}
\end{align}
The closed-form solution for two classes
($P_1$) 
is trivially given as +1 and -1 on the line. For 
more than 2 classes, 
$k \geq 2$, we start from the solution given for $k-1$ on $\mathbb{S}^{k-2}$. We add the vector for the new class as $p_1 = (1,0,\ldots,0)$. We place the solution $P_{k-1}$ at  $(-\frac{1}{k},0,\ldots,0)$, in the hyperplane perpendicular to $p_k$, where $P_{k-1}$ is shrunk so that its radius is $\sqrt{1 - \frac{1}{k^2}}$. In this manner, all points in $P_k$ are of radius 1 and uniformly distributed, as proven below. Figure~\ref{fig:method} shows the construction of our embeddings for 3 and 4 classes.

\noindent
\begin{theorem}
\label{section:theorem}
For any $k \geq 1$, $P_k$ is a maximally separated matrix.
\end{theorem}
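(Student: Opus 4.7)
The plan is to prove the theorem by induction on $k$, directly verifying the three conditions of Definition~\ref{def} (unit norm columns, equal pairwise cosine similarities, and columns summing to zero) using the recursive construction of $P_k$.

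For the base case $k=1$, the matrix $P_1 = (1, -1)$ clearly has unit-norm columns, they sum to zero, and their pairwise inner product is $-1 = -1/k$, matching Lemma~\ref{lemma}. For the inductive step, I assume $P_{k-1} \in \mathbb{R}^{(k-1) \times k}$ is maximally separated, so its columns $q_0, \ldots, q_{k-1}$ are unit vectors summing to zero with pairwise inner product $-1/(k-1)$ by Lemma~\ref{lemma}. From the recursion, the columns of $P_k$ are $p_0 = (1, 0, \ldots, 0)^T$ and $p_i = (-1/k,\ \sqrt{1 - 1/k^2}\, q_{i-1}^T)^T$ for $i = 1, \ldots, k$.

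The three verifications are then mechanical. For unit norm: $\|p_0\| = 1$, and $\|p_i\|^2 = 1/k^2 + (1 - 1/k^2)\|q_{i-1}\|^2 = 1$. For the zero-sum: the first coordinate gives $1 + k \cdot (-1/k) = 0$, while the remaining coordinates give $\sqrt{1 - 1/k^2} \sum_{i=0}^{k-1} q_i = 0$ by the inductive hypothesis. For equal pairwise inner products: $\langle p_0, p_i\rangle = -1/k$ trivially, and for $i, j \geq 1$ with $i \neq j$,
\begin{equation}
\langle p_i, p_j \rangle = \frac{1}{k^2} + \Bigl(1 - \frac{1}{k^2}\Bigr)\langle q_{i-1}, q_{j-1}\rangle = \frac{1}{k^2} - \frac{(k-1)(k+1)}{k^2(k-1)} = -\frac{1}{k},
\end{equation}
using the inductive value $-1/(k-1)$ from Lemma~\ref{lemma}.

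The main obstacle, such as it is, is the last algebraic step: the two scalar factors in the recursion ($-1/k$ on the first coordinate and $\sqrt{1 - 1/k^2}$ on the remaining block) are chosen precisely so that the $1/k^2$ contribution from the first coordinate combines with the shrunken inner product from $P_{k-1}$ to collapse to the value $-1/k$ demanded by Lemma~\ref{lemma}. Once this identity is checked, all three conditions follow and the induction closes, completing the proof.
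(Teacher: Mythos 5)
Your proof is correct and follows essentially the same route as the paper's: induction on $k$, a two-case split for the pairwise inner products (new vector versus two old vectors), and the same algebraic collapse to $-\frac{1}{k}$ via Lemma~\ref{lemma}, plus the zero-sum check. The only difference is that you also verify the unit-norm condition explicitly, which the paper states informally before the theorem rather than inside the proof --- a minor gain in completeness, not a different argument.
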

\begin{proof}
We proceed by induction. Note that, for $k = 1$, the angular similarity of the two vectors is clearly given by $-\frac{1}{k} = -1$. Now, assume the theorem holds for $k - 1 \in \mathbb{N}$ and let $(P_n)_{*i}$ denote the $i$-th column vector of $P_n$ for arbitrary $n \in \mathbb{N}$. Then, for $k$ and $i \neq j$, we obtain the case where either $i$ or $j$ is $1$ and the case where neither is $1$. For the first case, without loss of generality 
assume that the $i$-th element is the first, so that:
\begin{equation*}
    \big\langle (P_k)_{*i}, (P_k)_{*j} \big\rangle = -\frac{1}{k} + \sqrt{1 - \frac{1}{k^2}} \big\langle \mathbf{0}, (P_{k-1})_{*j} \big\rangle = -\frac{1}{k}.
\end{equation*}
For the second case, we derive:
\begin{align*}
    \big\langle (P_k)_{*i}, (P_k)_{*j} \big\rangle &= \frac{1}{k^2} + \Big(1 - \frac{1}{k^2}\Big) \big\langle (P_{k-1})_{*i},  (P_{k-1})_{*j} \big\rangle \\
    &= \frac{1}{k^2} - \Big(1 - \frac{1}{k^2}\Big) \Big(\frac{1}{k - 1}\Big) \\
    &= -\frac{1}{k},
\end{align*}
since $\big \langle (P_{k-1})_{*i}, (P_{k-1})_{*j} \big\rangle = -\frac{1}{k - 1}$ by assumption. Therefore, the vectors of $P_k$ have a pair-wise angular similarity of $-\frac{1}{k}$.
To prove that $P_k$ has zero mean, we again proceed by induction. Note that for $P_1$, the statement obviously holds. Now, assume the statement holds for $k - 1 \in \mathbb{N}$, then
\begin{equation*}
    \sum_{i=0}^{k} (P_k)_{*i} =
    \begin{pmatrix}
        1 - k * \frac{1}{k} \\
        \sqrt{1 - \frac{1}{k^2}} \sum_{i=0}^{k-1} (P_{k-1})_{*i}
    \end{pmatrix} = \mathbf{0},
\end{equation*}
since $\sum_{i=0}^{k-1} (P_{k-1})_{*i} = \mathbf{0}$ by assumption.
\end{proof}

\begin{wrapfigure}{r}{0.45\textwidth}
\vspace{-0.1cm}
    \centering
    \includegraphics[width=0.45\textwidth]{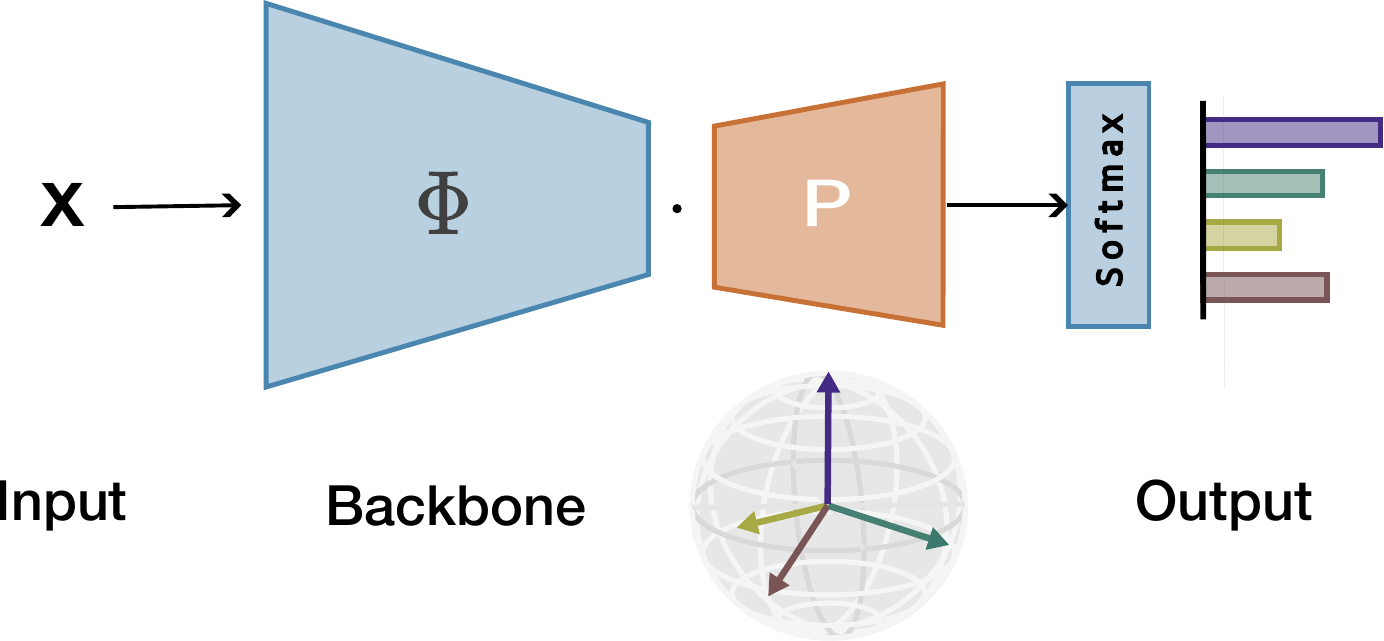}
    \caption{\textbf{Maximum separation pipeline.}}
    \label{fig:method_pipeline}
    \vspace{-0.1cm}
\end{wrapfigure}
For $C = k + 1$ classes, the solution $P_k$ consists of $C$ vectors of unit length. The main idea of this work is to utilize $P_k$ to obtain class logits under a fixed maximum separation, see Figure~\ref{fig:method_pipeline}. For sample $\mathbf{x}_i$, we want its network output 
$\hat{\mathbf{x}}_i = \Phi'(\mathbf{x}_i)$ 
to align with the vector of its corresponding class. Given $P_k$, the alignment is maximized if the network output points in the same direction as the class vector. This can be obtained by computing the dot product between both. Since we need to compute the logits for all class vectors in the final softmax activation, we simply augment the network output with a single matrix multiplication: $\mathbf{o}_i = P^T \hat{\mathbf{x}}_i$. By enforcing maximum separation from the start, a network no longer needs to learn this behaviour, essentially simplifying the learning objective to maximizing the alignment between samples and their fixed class vectors. We note that the norm of the class vectors is not taken into account during their construction as we operate on the hypersphere. We can optionally include a radius hyperparameter which controls the norm of the class vectors, \ie the network output 
is given as: $\mathbf{o}_i = \rho P^T \hat{\mathbf{x}}_i$ with $\rho$ as hyperparameter. In practice however, the radius of the class vectors is of little influence to the task performance and we set it to 1 unless specified otherwise.

\section{Experiments}

\noindent
\textbf{Implementation details.} Across our experiments, we train a range of network architectures including AlexNet~\cite{krizhevsky2012imagenet}, multiple ResNet architectures~\cite{he2016deep,zagoruyko2016wide}, and VGG32~\cite{simonyan2014very}, as provided by PyTorch~\cite{paszke2019pytorch}. All networks are optimized with SGD with a cosine annealing learning rate scheduler, initial learning rate of 0.1, momentum 0.9, and weight decay 5e-4. For AlexNet we use a step learning rate decay scheduler. All hyperparameters and experimental details are also available in the provided code.

\subsection{Classification and long-tailed recognition}

\begin{table}[t]
\centering
\resizebox{\linewidth}{!}{
\begin{tabular}{@{}l ccccc ccccc@{}}
\toprule
 & \multicolumn{5}{c}{\textbf{CIFAR-100}} & \multicolumn{5}{c}{\textbf{CIFAR-10}}\\
 & - & 0.2 & 0.1 & 0.02 & 0.01 & - & 0.2 & 0.1 & 0.02 & 0.01\\
\midrule
ConvNet & 56.70 & 45.97& 40.34	& 27.35	& 16.59 & 86.68 & 79.47 & 73.90 & 51.40 & 43.67  \\
\rowcolor{Gray}
+ This paper & \textbf{57.05} & \textbf{46.59} & \textbf{40.44} & \textbf{28.27} & \textbf{18.40} & \textbf{86.76} & \textbf{79.63} & \textbf{75.88} & \textbf{55.25} & \textbf{48.05}\\
& \scriptsize{+0.35} & \scriptsize{+0.62} & \scriptsize{+0.10} & \scriptsize{+0.92} & \scriptsize{+1.81} & \scriptsize{+0.08} & \scriptsize{+0.16} & \scriptsize{+1.98} & \scriptsize{+3.85} & \scriptsize{+4.38}\\
\midrule
ResNet-32 & 75.77 & 65.74 & 58.98 & 42.71 & 35.02 & 94.63 & 88.17 & 83.10 & 68.64 & 56.98\\
\rowcolor{Gray}
+ This paper & \textbf{76.54} & \textbf{66.01} & \textbf{60.54} & \textbf{45.12} & \textbf{38.85} & \textbf{95.09} & \textbf{91.42} & \textbf{88.16} & \textbf{77.02} & \textbf{69.70}\\
& \scriptsize{+0.77} & \scriptsize{+0.27} & \scriptsize{+1.56} & \scriptsize{+2.41} & \scriptsize{+3.83} & \scriptsize{+0.46} & \scriptsize{+3.25} & \scriptsize{+5.06} & \scriptsize{+8.38} & \scriptsize{+12.72}\\
\bottomrule
\end{tabular}
}
\caption{\textbf{Adding maximum separation as inductive bias on CIFAR-10 and CIFAR-100} for standard and imbalanced settings. Both the AlexNet and a ResNet architectures benefit from an embedded maximum separation, especially when imbalance increases and networks are more expressive.
\vspace{-0.5cm}}
\label{tab:lt-cifar}
\end{table}

\noindent
\textbf{Classification with maximum separation.} In the first set of experiments, we evaluate the potential of maximum separation as inductive bias in classification and long-tailed recognition settings using the CIFAR-100 and CIFAR-10 datasets \cready{along with their long-tailed variants \cite{cui2019class}}. Our maximum separation is expected to improve learning especially when dealing with under-represented classes which will be separated by design with our proposal. We evaluate on a standard ConvNet and a ResNet-32 architecture with four imbalance factors: 0.2, 0.1, 0.02, and 0.01. We set $\rho=0.1$ for ResNet-32 as this provides a minor improvement over $\rho=1$. The results are shown in Table~\ref{tab:lt-cifar}. \cready{We report the confidence intervals for these experiments in the supplementary materials.}
 For an AlexNet-style ConvNet, classification accuracy improves in all settings, especially when imbalances is highest. With a more expressive network such as a ResNet, we find that embedding maximum separation provides a bigger boost in performance. The higher the imbalance, the bigger the improvements; on CIFAR-100 the accuracy improves from 35.02 to 38.85, on CIFAR-10 from 56.98 to 69.70, an improvement of 12.72 percent point. For further analysis, we have investigated the relation between accuracy improvement and train sample frequency. In Figure~\ref{fig:cifar-acc} we show the results over the imbalance factors, highlighting that our proposal improves all classes as imbalance increases, especially classes with lower sample frequencies. 
 
 \cready{In the supplementary materials, we furthermore report the Angular Fisher Score as given by Liu \etal~\cite{liu2017sphereface}. This score evaluates the angular feature discriminativeness of the trained models. We also compare our closed-form fixed matrix to the optimization-based fixed matrix of Mettes \etal~\cite{mettes2019hyperspherical}. These results show that the closed-form maximum separation is more discriminative than optimization-based separation. } 

\begin{figure}[b]
    \centering
    \includegraphics[width=0.975\textwidth]{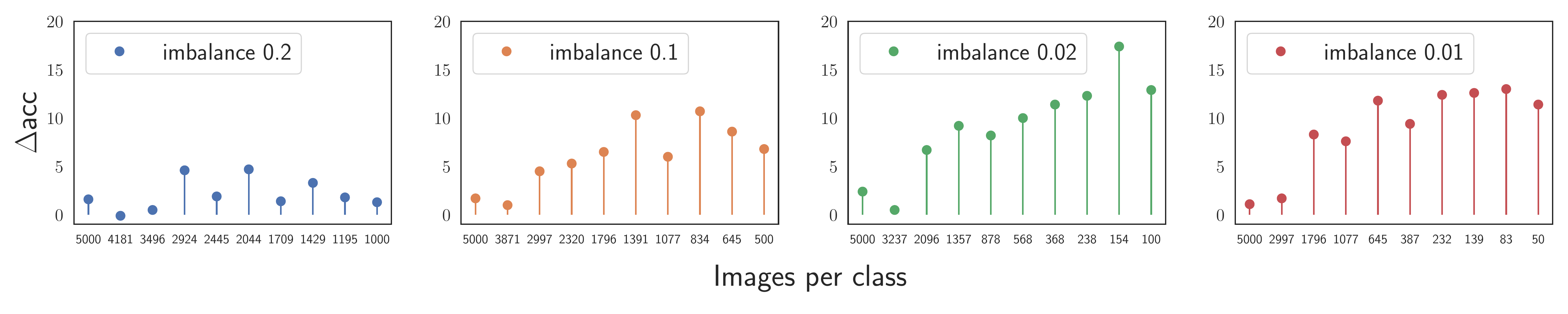}
    \caption{\textbf{Effect of maximum separation on class accuracy as a function of sample frequency.} We show the per class accuracy improvements when adding separation on CIFAR-10 with a ResNet-32 for imbalance factors (from left to right) 0.2, 0.1, 0.02, 0.01. The accuracy increases over all classes as imbalance increases, especially classes with lower sample frequencies.}
    \label{fig:cifar-acc}
\end{figure}

\begin{figure}[t]
    \centering
    \subfloat[CIFAR-100.]{\includegraphics[width=0.45\textwidth]{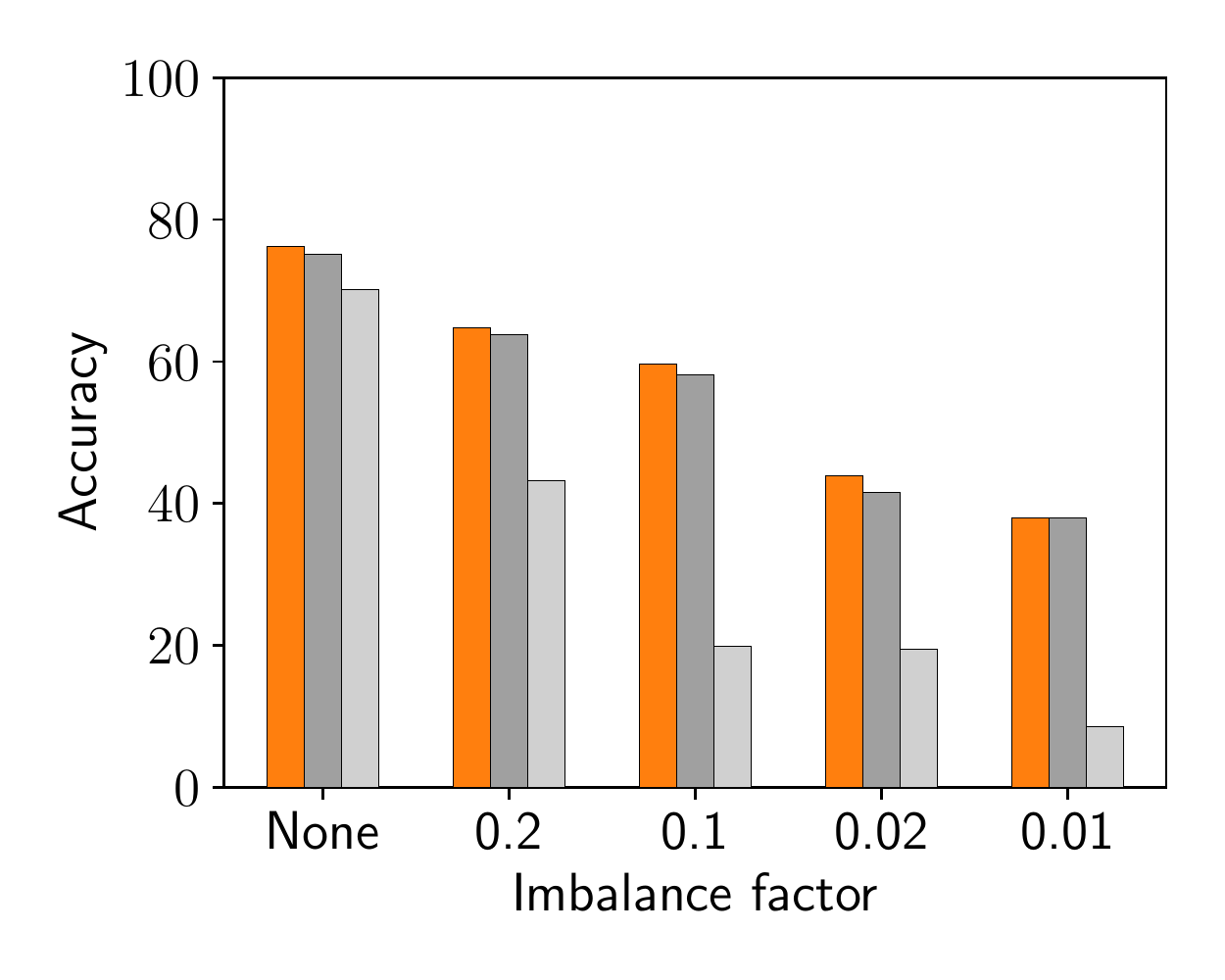}\label{fig:c100imb}}
  \hfill
  \subfloat[CIFAR-10.]{\includegraphics[width=0.45\textwidth]{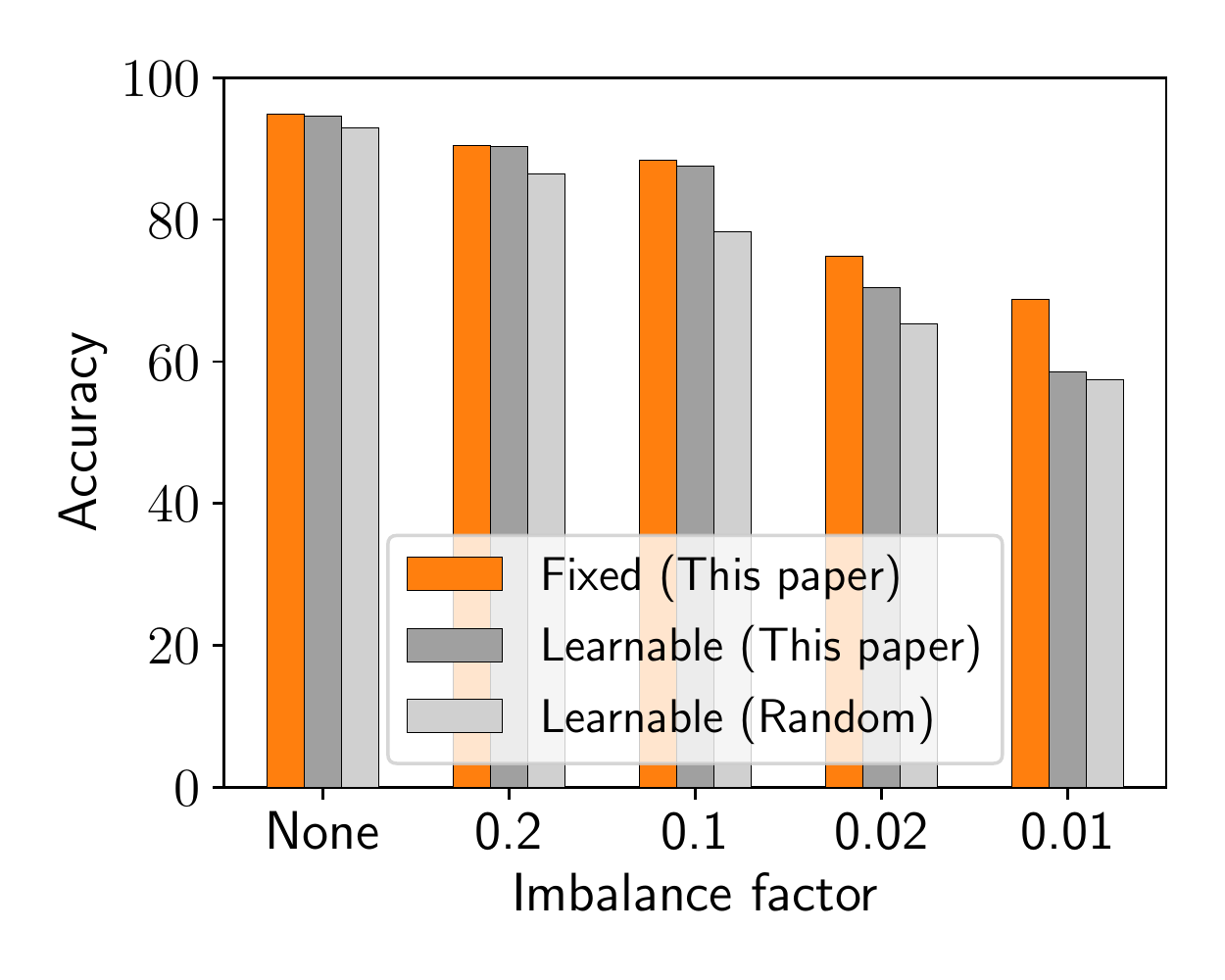}\label{fig:c10imb}}
    \caption{\textbf{Should maximum separation be embedded as a fixed inductive bias?} On both CIFAR-100 and CIFAR-10, we find that embedding a fixed separation (orange) works best, outperforming baselines that seek to further optimize a matrix initialized with maximum separation and that simply add another learnable matrix initialized like any other layer. Maximum separation should be computed \emph{a priori} and kept fixed in the network architecture.}
    \label{fig:cifar-inductive}
\end{figure}

\noindent
\textbf{Should maximum separation be a fixed inductive bias?} Separation in this work takes the form of a fixed matrix multiplication that can be plugged on top of any network. But is the information in this matrix as inductive bias optimal for classification? To test this hypothesis, we compare the previous results on CIFAR-100 and CIFAR-10 and their long-tailed variants \cready{\cite{cui2019class}} to two baselines. The first baseline makes the matrix $P$ learnable and our proposal acts as an initialization of the learnable matrix. Intuitively, if the results improve by making the matrix learnable instead of fixed, there is more information that is not captured in the closed-form solution. The second baseline adds a learnable fully-connected layer with standard random initialization instead of a fixed matrix multiplication, to investigate whether our obtained improvements are not simply a result of extra computational efforts.

The results are shown in Figure~\ref{fig:cifar-inductive}. On both datasets across all imbalance factors, embedding maximum separation as a fixed constraint in a network works best. Making the matrix learnable actually decreases the performance, indicating that further optimization discards important information and highlighting that separation is an inductive bias we should not steer away from. Simply adding an extra learnable layer with standard initialization is not effective, which indicates that improvements are not guaranteed when adding more fully-connected layers. We conclude that maximum separation is best added in a rigid manner as strict constraint for networks to adapt to.

\noindent
\textbf{Enriching long-tailed recognition approaches.} So far, we have shown that conventional network architectures are better off with maximum separation as inductive bias for classification and long-tailed recognition. In the third experiment, we investigate the potential of adding our proposal on top of methods that are specifically designed for long-tailed recognition. We perform experiments on two methods, LDAM~\cite{cao2019learning} and MiSLAS~\cite{zhong2021mislas}. LDAM addresses learning on imbalanced data through a label-distribution-aware margin loss which replaces the conventional cross-entropy loss. We can easily augment LDAM by placing the matrix on top of the used backbone. Their hyperparameter $\mu$ in LDAM is set to 0.5 when using their method as is and set to 0.4 when augmented with our proposal as this maximizes the scores for both settings. MiSLAS is a recent state-of-the-art approach for long-tailed recognition that improves two-stage methods by means of label-aware smoothing and shifted batch normalization. We again plug in our approach in the provided code simply as a fixed matrix multiplication on top of the used backbone.

In Table~\ref{table:xsota}, we report the results on CIFAR-100 using a ResNet-32 backbone across three imbalance factors. For LDAM, we find that both for the SGD and the DRW variants, adding maximum separation on top improves the accuracy. For imbalance factor 0.01 for LDAM, we improve the accuracy from 39.87 to 42.02 with the SGD variant and from 42.37 to 43.19 for the DRW variant. The recent MiSLAS approach to long-tailed recognition also benefits from maximum separation after both stages. The most competitive results after their stage 2 are improved by 1.59, 0.83, and 0.42 percent point for respectively imbalance factors 0.1, 0.02, and 0.01. The long-tailed baselines are specifically designed to address class imbalance but additionally benefit from maximum separation across the board. We conclude that maximum separation as a fixed inductive bias complements task-specific approaches, highlighting its general applicability and usefulness.

\begin{table}[t]
\resizebox{\linewidth}{!}{
\begin{tabular}{l ccc}
\toprule
& \multicolumn{3}{c}{Imbalance factor}\\
& 0.1 & 0.02 & 0.01\\
\midrule
LDAM-SGD & 55.05 & 43.85 & 39.87\\
\rowcolor{Gray}
 + This paper & \textbf{57.72} & \textbf{45.14} & \textbf{42.02}\\
  & \scriptsize{+2.67} & \scriptsize{+1.29} & \scriptsize{+2.20}\\
\midrule
LDAM-DRW & 57.45 & 47.56 & 42.37\\
\rowcolor{Gray}
 + This paper & \textbf{58.37} & \textbf{48.02} & \textbf{43.19}\\
 & \scriptsize{+0.92} & \scriptsize{+0.46} & \scriptsize{+0.82}\\
\bottomrule
\end{tabular}
\begin{tabular}{l ccc}
\toprule
& \multicolumn{3}{c}{Imbalance factor}\\
& 0.1 & 0.02 & 0.01\\
\midrule
MiSLAS (stage 1) & 58.36 & 44.69 & 40.29\\
\rowcolor{Gray}
 + This paper & \textbf{59.63} & \textbf{45.65} & \textbf{40.56}\\
 & \scriptsize{+1.27} & \scriptsize{+0.96} & \scriptsize{+0.27}\\
\midrule
MiSLAS (stage 2) & 61.93 & 52.53 & 48.00\\
\rowcolor{Gray}
 + This paper & \textbf{63.52} & \textbf{53.36} & \textbf{48.42} \\
 & \scriptsize{+1.59} & \scriptsize{+0.83} & \scriptsize{+0.42}\\
\bottomrule
\end{tabular}
}
\caption{\textbf{Enriching imbalanced learning algorithms with our maximum separation} on CIFAR-100 for imbalance factors 0.1, 0.02, and 0.01. We perform experiments with two variants of LDAM of \cite{cao2019learning} and with the recent MiSLAS of \cite{zhong2021mislas} after both stages of their approach. All results are obtained by running the author-provided code. Across all methods and imbalances factors, we find that a maximum separation inductive bias provides a consistent boost in accuracy.}
\label{table:xsota}
\end{table}

\noindent
\paragraph{ImageNet experiments.} To highlight that our proposal is also beneficial with many classes and deeper networks, we provide results on ImageNet for two ResNet backbones in Table~\ref{tab:imagenet}. With a ResNet-50 backbone, we obtain improvements of 1.6 percent point on ImageNet and 3.5 percent point on its long-tailed variant in top 1 accuracy. With a deeper ResNet-152, the improvements are 0.6 and 1.4 percent points for the standard and long-tailed benchmarks. We also find that the top 5 accuracy improves from 92.4 to 94.9 with ResNet-50 and from 94.3 to 95.1 with ResNet-152. We conclude that the proposed inductive bias in matrix form is also useful for larger-scale classification.

\noindent
\cready{\paragraph{On feature dimensionality and scaling.} We set the last layer feature dimension to $D = 512$ or $1024$ depending on the dataset. The size of the learnable last layer is $D \times k$. On the top of this we add our fixed matrix of size $k \times k+1$ to get class logits. This means that in terms of learnable parameters, there is no noticeable difference between the standard setup and our maximum separation formulation. With many classes however, the fixed final matrix will be large, which can lead to extra computational effort. At the ImageNet scale (1,000 classes) training and inference times are similar, but we have yet to investigate extreme classification cases.}

\begin{table}[t]
\centering
\resizebox{0.925\linewidth}{!}{
\begin{tabular}{l cc cc cc cc}
\toprule
 &
  \multicolumn{4}{c}{\textbf{Resnet-50}} &
  \multicolumn{4}{c}{\textbf{Resnet-152}} \\
  \cmidrule(lr){2-5} \cmidrule(lr){6-9}
 & \multicolumn{2}{c}{top 1} & \multicolumn{2}{c}{top 5} & \multicolumn{2}{c}{top 1} & \multicolumn{2}{c}{top 5}\\
  \cmidrule(lr){2-3} \cmidrule(lr){4-5} \cmidrule(lr){6-7} \cmidrule(lr){8-9}
\multicolumn{1}{c}{} &
  \multicolumn{1}{c}{Base} &
  \multicolumn{1}{c}{+ Ours} &
  \multicolumn{1}{c}{Base} &
  \multicolumn{1}{c}{+ Ours} &
  \multicolumn{1}{c}{Base} &
  \multicolumn{1}{c}{+ Ours} &
  \multicolumn{1}{c}{Base} &
  \multicolumn{1}{c}{+ Ours} \\ \hline
Imagenet    & 73.2 & \cellcolor{Gray} \textbf{74.8}  & 92.4 & \cellcolor{Gray} \textbf{94.9} & 77.9  & \cellcolor{Gray} \textbf{78.5} & 94.3 & \cellcolor{Gray} \textbf{95.1} \\
Imagenet-LT & 43.8 & \cellcolor{Gray} \textbf{47.3}  & 70.4 & \cellcolor{Gray} \textbf{73.6} & 48.3 & \cellcolor{Gray} \textbf{49.7}  & 73.9 & \cellcolor{Gray} \textbf{74.8} \\ \bottomrule
\end{tabular}
}
\caption{\textbf{Evaluations on ImageNet and ImageNet-LT} for two ResNet architectures. Both the top 1 and top 5 accuracies improve on ImageNet when embedding maximum separation on top of the architectures, with a further boost for the long-tailed ImageNet variant\cready{\cite{cui2019class}}.
}
\label{tab:imagenet}
\end{table}



\subsection{Out-of-distribution detection and open-set recognition}

We also investigate the potential of our proposal outside the closed set of known classes. To that end, we perform additional experiments on out-of-distribution detection and open-set recognition.

\noindent
\textbf{Out-of-distribution with maximum separation.}
Intuitively, maximum separation and uniformity between classes allows for more opportunities for samples outside the distribution of known classes to fall in between the spaces of class vectors, especially since all classes are positioned beyond orthogonal to each other as per Lemma~\ref{lemma}. We evaluate this intuition first on out-of-distribution detection. Following Liu et.al. \cite{liu2020energy}, we use CIFAR-100 as in-distribution set and experiment with SVHN and Placed 365 as out-of-distribution sets. Using the same experiment settings reported in their paper, we train on a WideResNet and compare the out-of-distribution performance on the common metrics FPR95, AUROC, and AUPR. We determine whether a sample is in- or out-of-distribution using three scoring functions. The first directly uses the maximum softmax probability to distinguish in- and out-of-distribution samples~\cite{hendrycks2016baseline}. The second is the energy score as introduced in Liu \etal~\cite{liu2020energy}, which classifies an input as out-of-distribution if the negative energy score is smaller than a threshold value, The third is the confidence score based on Mahalanobis distance from Lee \etal~\cite{lee2018simple}. 


\begin{table}[t]
\resizebox{\linewidth}{!}{%
\begin{tabular}{l l ccc ccc}
\toprule
& Metric & \multicolumn{3}{c}{w/o maximum separation} & \multicolumn{3}{c}{w/ maximum separation}  \\ \cmidrule(lr){3-5} \cmidrule(lr){6-8}
\multicolumn{1}{c}{} & \multicolumn{1}{c}{}& FPR95 ↓ & AUROC ↑  & AUPR ↑  & FPR95 ↓& AUROC ↑& AUPR ↑\\ \hline
 & Softmax Score  & 84.00   & 71.40 & 92.87 & \cellcolor{Gray}\textbf{83.04} & \cellcolor{Gray}\textbf{75.58} & \cellcolor{Gray}\textbf{94.59} \\
{\multirow{1}{*}{SVHN}} & Energy Score   & 85.76  & 73.94 & 93.91  & \cellcolor{Gray}\textbf{78.86} & \cellcolor{Gray}\textbf{85.42} & \cellcolor{Gray}\textbf{96.92} \\
& Mahalanobis  & 44.02& 90.48 & 97.83 & \cellcolor{Gray}\textbf{35.88} & \cellcolor{Gray}\textbf{91.45} & \cellcolor{Gray}\textbf{97.93} \\
\midrule
& Softmax Score & \textbf{82.86} & 73.46  & 93.14    & \cellcolor{Gray}83.08 & \cellcolor{Gray}\textbf{73.53} & \cellcolor{Gray}\textbf{93.54} \\
{\multirow{1}{*}{Places 365}} & Energy Score   & \textbf{80.87}    & 75.17  & 93.40  & \cellcolor{Gray}81.36& \cellcolor{Gray}\textbf{76.01} & \cellcolor{Gray}\textbf{93.64} \\
  & Mahalanobis & \textbf{88.83} & 67.87 & 90.71 & \cellcolor{Gray}89.16 & \cellcolor{Gray}\textbf{69.33}  & \cellcolor{Gray}\textbf{91.49}  \\
\bottomrule
\end{tabular}
}
\caption{\textbf{Out-of-distribution detection on CIFAR-100} with and without maximum separation embedded in a Wide ResNet, using SVHN and Placed 365 as out-of-distribution datasets. On SVHN we observe improvements across all matrics and measures. On Places 365, out-of-distribution detection is better for the AUROC and AUPR metrics, but not the FPR95.
\vspace{-0.25cm}}
\label{tab:ood-cifar100}
\end{table}

The results are shown in Table~\ref{tab:ood-cifar100}. With SVHN as out-of-distribution set, embedding a maximum separation inductive bias improves all metrics across all out-of-distribution detection methods. Especially the energy score benefits from maximum separation with an improvement of 6.90 in FPR95, 11.48 in AUROC, and 3.01 in AUPR. On Places 365, the results are overall closer. Maximum separation improves on the AUROC and AUPR metrics but not the FPR95 metric. We observe that adding maximum separation performs best for near out-of-distribution data. In Figure~\ref{fig:ood-energy} we provide further analyses on the effect of maximum separation for out-of-distribution detection with energy scores, highlighting that separation increases the gap in energy scores between both sets, making it easier to discriminate both. We conclude that a maximum separation inductive bias is valuable for out-of-distribution detection, especially for near out-of-distribution data.


\begin{figure}
    \centering
    \subfloat[SVHN as OOD.]{\includegraphics[width=0.5\textwidth]{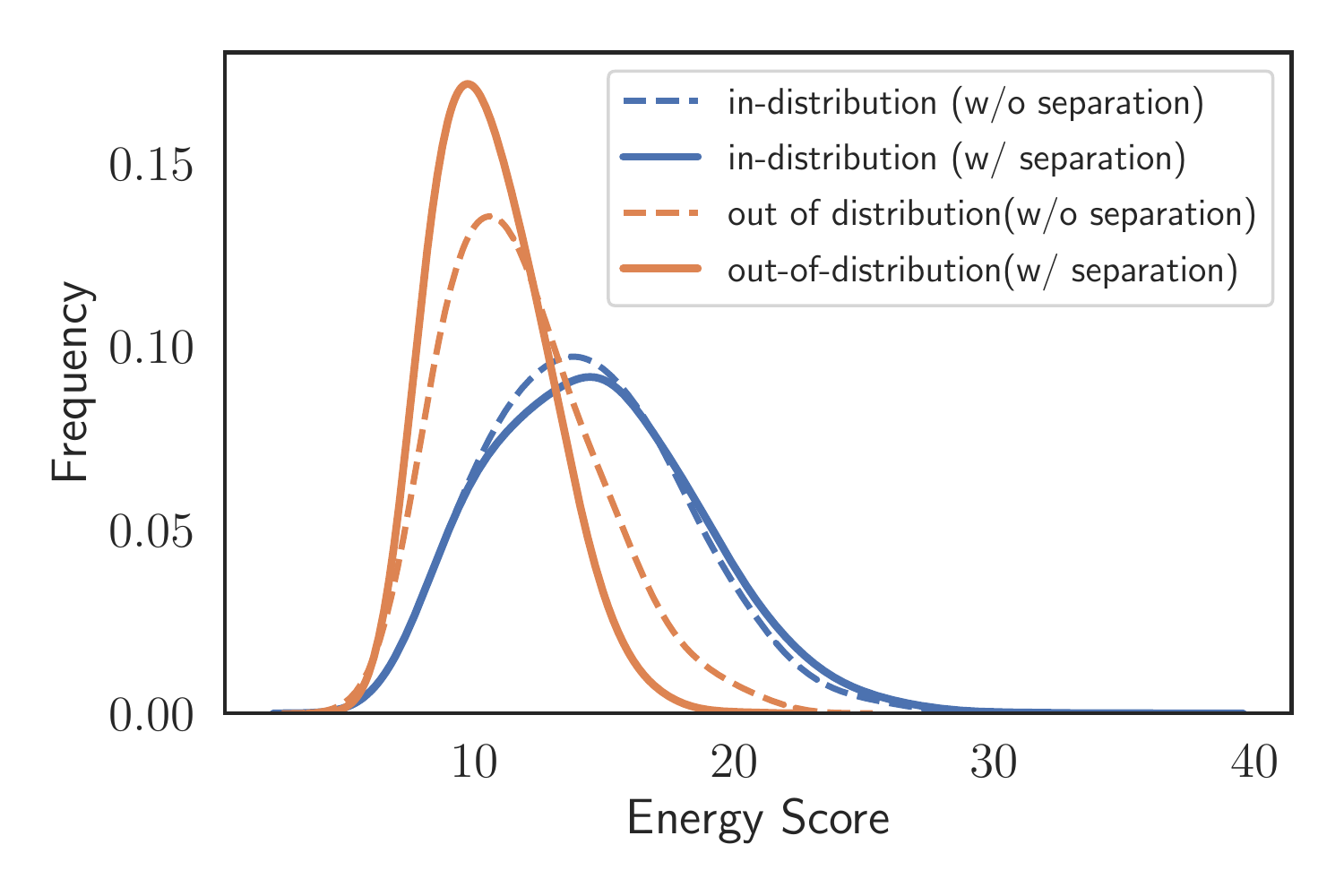}\label{fig:ood-svhn}}
  \hfill
  \subfloat[Places365 as OOD.]{\includegraphics[width=0.5\textwidth]{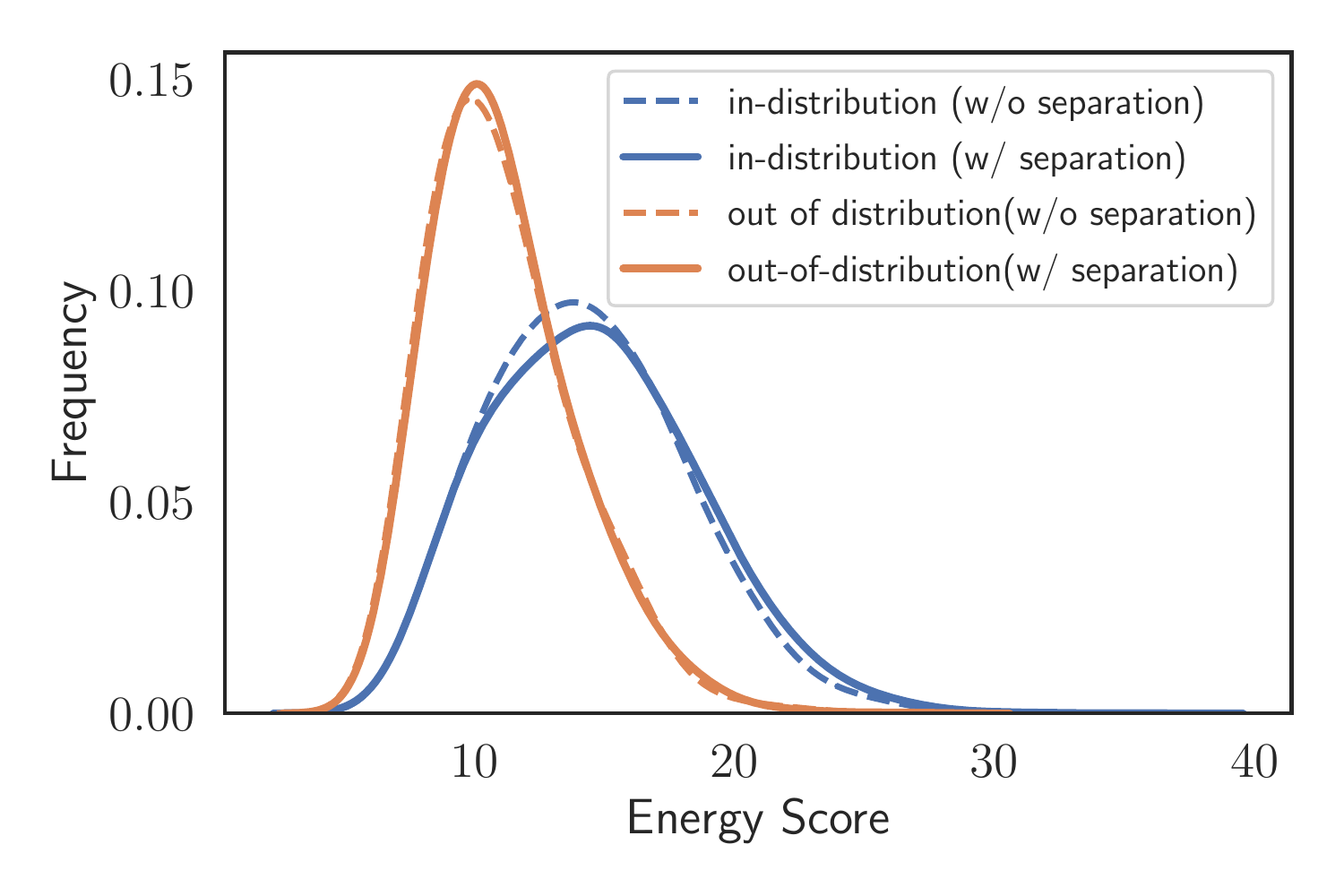}\label{fig:ood-places}}
    \caption{\textbf{Energy scores for out-of-distribution detection.} We plot the inverse of the energy scores for ease of visualization.
    With SVHN as out-of-distribution dataset for CIFAR-100 as in-distribution dataset, we find that embedding maximum separation increases the gap in energy scores between both datasets, making it easier to differentiate between both. With Places 365 as out-of-distribution dataset, the energy scores are affected less, resulting in smaller improvements with maximum separation.}
    \label{fig:ood-energy}
\end{figure}

\noindent
\textbf{Open-set recognition with maximum separation.} We also investigate the potential of maximum separation for open-set recognition, which can be viewed as a generalization of out-of-distribution detection~\cite{liu2019large}. To take our proposal to the ultimate test, we start from the recent state-of-the-art work of Vaze \etal~\cite{vaze2022openset}. We experiment with adding our proposal to two approaches outlined in their work. The first applies various training tricks and hyperparameter tuning to improve the closed-set classifier, followed by open-set recognition using the maximum softmax probability (MSP+) score. The second performs the same training with the Maximum Logits Score (MLS) before the softmax activations to differentiate closed- from open-set examples. All experiments are run on author-provided code on four benchmarks: SVHN, CIFAR10, CIFAR + 10, and CIFAR + 50, where we use the 5 splits and report the average scores as outlined by Vaze \etal~\cite{vaze2022openset}.

The results of the open-set experiments are shown in Table~\ref{tab:osr} with the AUROC metric. For both the MSP+ and the state-of-the-art MLS scores, maximum separation is a positive addition. This result follows the intuition of the original approach that improving the closed-set classifier -- here by means of the extra inductive bias -- is also beneficial for open-set recognition. \cready{We additionally visualize the $2d$ features of the last layer for our approach in the supplementary materials. The visualization shows that, as training progresses, the norm of the known classes gets bigger than the norm of the open-set classes, making differentiation between both feasible.}

\begin{table}[t]
\centering
\begin{tabular}{l cccc}
\toprule
 & SVHN & CIFAR10 & CIFAR + 10 & CIFAR + 50\\
\midrule
MSP+ & 95.94 & 90.10 & 94.48 & 93.58\\
\rowcolor{Gray}
 + This paper & \textbf{96.22} & \textbf{91.05} & \textbf{95.57} & \textbf{94.31}\\
 & \scriptsize{+0.28} & \scriptsize{+0.95} & \scriptsize{+1.09} & \scriptsize{+0.73}\\
\midrule
MLS & 97.10 & 93.82 & 97.94 & 96.48\\
\rowcolor{Gray}
+ This paper & \textbf{97.58} & \textbf{95.30} &\textbf{98.33} & \textbf{96.74}\\
& \scriptsize{+0.48} & \scriptsize{+1.48} & \scriptsize{+0.39}& \scriptsize{+0.26}\\
\bottomrule
\end{tabular}
\caption{\textbf{Open-set recognition on four benchmarks} with and without maximum separation building on the experimental settings of Vaze et.al. \cite{vaze2022openset}. Whether using the maximum probability or the recently proposed maximum logit to perform open-set recognition, a fixed maximum class separation helps to differentiate closed-set from open-set samples.
\vspace{-0.5cm}}
\label{tab:osr}
\end{table}

\section{Related work}

In deep learning literature, the principle of separation has been investigated from several perspectives. Separation can be achieved by enforcing or regularizing orthogonality. For example, Li et al.~\cite{Li2020oslnet} propose to regularise class vectors orthogonal to each other. Orthogonality is also often used as regularisation for intermediate network layers, see \eg~\cite{bansal2018can,Ranasinghe_2021_ICCV,rodriguez2016regularizing,wang2020orthogonal,Xie17init,Xie2017latent}. Orthogonal constraints are fixed and easy to obtain by design. For classification specifically however, orthogonality does not result in a maximal separation between classes, as only the positive sub-space is utilized. We strive for a solution that makes use of the entire output space.

Beyond orthogonality, multiple works improve class separation by promoting angular diversity between classes. A common approach to do so is by incorporating class margin losses~\cite{deng2019arcface,liu2017sphereface,liu2017deep,wang2018cosface,zheng2018ring}. For example, the generalized large-margin softmax loss enforces both intra-class compactness and inter-class separation~\cite{liu2016large}. These works are in line with the observations of Liu \etal~\cite{liu2018decoupled} that deep networks have a natural tendency to break classification down into maximizing class separation and minimizing intra-class variance. In particular, separation has in recent years been successfully tackled by adopting the perspective of hyperspherical uniformity. Such uniformity can for example be optimized in deep networks by minimizing the hyperspherical energy between class vectors~\cite{lin2020regularizing,liu2018learning,pmlr-v130-liu21d,graf2021dissecting}. Recently, Zhou \etal~\cite{zhou2022learning} extend this line of work by learning towards the largest margin with a zero-centroid regularization. Hyperspherical uniformity can also be approximated by minimizing the maximum cosine similarity between pairs of class vectors~\cite{ghadimi2021hyperbolic,mettes2019hyperspherical,wang2020mma}. In this work, we also start from hyperspherical uniformity and deviate from current literature on one important axis: hyperspherical uniformity does not require optimization. We show and prove that maximum separation has a closed-form solution and can be added as a fixed matrix to any network. \cready{Mroueh \etal~\cite{mroueh2012multiclass} also show that such a closed-form solution is beneficial when added to SVM loss functions and mention it could be generalized to other loss functions.} Having maximum separation as a fixed inductive bias does not require additional hyperparameters and comes with minimal engineering effort, lowering the barrier towards broad integration in the field.

Our work fits into a broader tendency of incorporating inductive biases into deep learning, for example by incorporating rotational symmetries~\cite{cohen2016group, dieleman2016exploiting, lenssen2018group, simm2020symmetry, tai2019equivariant, thomas2018tensor, vanderPol2020b, wang2022equivariant, worrall2017harmonic}, scale equivariance~\cite{sosnovik2019scale, worrall2019deep}, gauge equivariance~\cite{cohen2019gauge, he2021gauge}, graph structure~\cite{battaglia2018, brandstetter2021geometric, jiang2018graph, velickovic2018graph}, physical inductive biases~\cite{anderson2019cormorant, klicpera2020directional, pfau2020ab, schutt2018schnet}, symmetries between reinforcement learning agents~\cite{bohmer2020deep, liu2020pic, van2021multi, wang2018nervenet}, and visual inductive biases~\cite{shi2022measuring,tomen2021deep,xu2021vitae}. Embedding maximum separation as inductive bias is complementary to the listed works.

\noindent
\cready{\paragraph{Relation to Neural Collapse} Neural Collapse is an empirical phenomenon that arises in the features of the last layer and the classifiers of neural networks during terminal phase training~\cite{papyan2020neural}. In neural collapse, the class means and the classifiers themselves collapse to the vertices of a simplex equiangular tight frame. From these insights, recent work by Zhu \etal \cite{zhu2021geometric} fix the last layer classifier with a fixed simplex of size $(k+1) \times d$ with  $d >= (k+1)$ for $k+1$ classes. Our work reaches a similar conclusion from the perspective of maximum separation and we find its potential also reaches long-tailed and out-of-distribution detection. Our recursive algorithm is more compact at size $(k+1) \times k$ and we plug our approach on top of any network architecture, rather than replace the final layer as done in \cite{zhu2021geometric}. Due to the recursive nature of our approach, there is also potential for incremental learning with maximum separation.}

\section{Conclusions}
This paper strives to embed a well-known inductive bias into deep networks: separate classes maximally. Separation is not an optimization problem and can be solved optimally in closed-form. We outline a recursive algorithm to compute maximum class separation and add it as a fixed matrix multiplication on top of any network. In this manner, maximum separation can be embedded with negligible effort and computational complexity. To showcase that our proposal is an effective building block for deep networks, we perform various experiments on classification, long-tailed recognition, out-of-distribution detection, and open-set recognition. We find that our solution to maximum separation as inductive bias (i) improves classification, especially when classes are imbalanced, (ii) is best treated as a fixed matrix, (iii) improves standard networks and task-specific state-of-the-art algorithms, and (iv) helps to detect samples from outside the training distribution. With separation decoupled and solved prior to training, a network only needs to optimize the alignment between samples and their fixed class vectors. We conclude that our one fixed matrix is an effective, broadly applicable, and easy-to-use addition to classification in networks.
Our approach is currently focused on multi-class settings only, where exactly one label needs to be assigned to each example.  Maximum separation does also not naturally generalize to zero-shot settings, as such settings require that classes are represented by semantic vectors that point in similar directions based on semantic similarities. 
While we focus on the classification supervised setting, maximum separation has potential for unsupervised learning as well as self-supervised learning paradigm. As the matrix imposes geometric structure in the output space it could potentially help in improving the uniformity of classes.

\bibliographystyle{plain}
\bibliography{references}
\newpage

 \begin{center}
        \Large
        \textbf{Maximum Class Separation as Inductive Bias in One Matrix}
            
       Supplementary Material

\end{center}

\section{Angular Fisher Score analysis} 

We report the Angular Fisher Score from Liu \etal \cite{liu2017sphereface} in the table below for CIFAR-10 and CIFAR-100 test sets. We trained a ResNet-32 with the same settings as Table-1 from the paper. For the Angular Fisher Score, lower is better. Across datasets and imbalance factors, the score is lower with maximum separation, providing additional verification of our approach. 

\begin{table}[h!]
\centering
\begin{tabular}{lllllll}
\toprule
& \multicolumn{3}{c}{\textbf{CIFAR-10}}  & \multicolumn{3}{c}{\textbf{CIFAR-100}}              \\
  & -               & 0.1             & 0.01            & -               & 0.1             & 0.01            \\
\midrule
SCE        & 0.0583          & 0.2305          & 0.4141          & 0.2954          & 0.4958          & 0.7202          \\
\rowcolor{Gray}
This Paper & \textbf{0.0555} & \textbf{0.1397} & \textbf{0.3240} & \textbf{0.1521} & \textbf{0.4483} & \textbf{0.6952} \\
\bottomrule
\end{tabular}
\caption{\textbf{Angular Fisher Score} for standard and imbalanced settings. Lower fisher score indicates better discriminative features. }
\vspace{-0.5cm}
\label{tab:fisher-score}
\end{table}

\section{Comparison to optimization-based separation} 

We compare our approach to a baseline that optimizes for class vectors through optimization and fixes the vectors afterwards. One such methods is the hyperspherical prototype approach of Mettes \etal \cite{mettes2019hyperspherical}. We have looked into the class vectors themselves, as well as the downstream performance. For the class vectors, we find that a gradient-based solution has a pair-wise angular variance of over one degree for 100 classes, indicating that not all classes are equally well separated, while we do not have such variability. We have also performed additional long-tailed recognition experiments for our maximum separation approach versus the hyperspherical prototype approach of Mettes \etal \cite{mettes2019hyperspherical}. Below are the results for CIFAR-10 and CIFAR-100 for three imbalance ratios:

\begin{table}[h!]
\centering
\begin{tabular}{lcccccc}
\toprule
{ }               & \multicolumn{3}{c}{\textbf{CIFAR-10}}                                        & \multicolumn{3}{c}{\textbf{CIFAR-100}}                                       \\ 
             &  -  &  0.1   &  0.01  &  -  &  0.1   &  0.01  \\ \midrule
 Mettes \etal & { 93.27} & { 86.16} & { 61.63} & { 71.58} & { 53.28} & { 34.08} \\
 \rowcolor{Gray}
This Paper   & \textbf{95.09} & \textbf{88.16} & \textbf{69.70} & \textbf{76.23} & \textbf{60.54} & \textbf{38.85} \\ 
\bottomrule
\end{tabular}
\caption{\textbf{Comparison to optimization approach} of Mettes \etal which first optimizates for maximally separated class vectors and fixes vectors during training.}
\vspace{-0.5cm}
\label{tab:mettes-comparison}
\end{table}

We conclude that a closed-form maximum separation is preferred for recognition.  

\section{Error bars for Table 1} 

 We have run the experiments in Table 1 of the main paper 5 times and added error bars. The results show that over multiple runs, the improvements are stable.

\begin{table}[h!]
\centering
 \resizebox{\textwidth}{!}{%
\begin{tabular}{lllllllllll}
\toprule
 & \multicolumn{5}{c}{\textbf{CIFAR-100}}   & \multicolumn{5}{c}{\textbf{CIFAR-10}   }                                 \\
& \multicolumn{1}{c}{-} & \multicolumn{1}{c}{0.2} & \multicolumn{1}{c}{0.1} & \multicolumn{1}{c}{0.02} & \multicolumn{1}{c}{0.01} & \multicolumn{1}{c}{-} & \multicolumn{1}{c}{0.2} & \multicolumn{1}{c}{0.1} & \multicolumn{1}{c}{0.02} & \multicolumn{1}{c}{0.01} \\
\midrule
ConvNet           & 56.45± 0.32           & 45.88 ± 0.43            & 40.04± 0.38             & 27.17± 0.52              & 16.31 ± 0.22             & 86.30± 0.21           & 78.37± 1.04             & 73.6± 0.58              & 51.71 ± 0.38             & 42.72 ± 1.21             \\
\rowcolor{Gray}
+ This Paper      &   \textbf{57.05± 0.55}  & \textbf{46.21± 0.45}    & \textbf{40.44± 0.23}    & \textbf{28.16 ± 0.31}    & \textbf{18.15 ± 0.53}    & \textbf{86.48± 0.20}  & \textbf{79.44± 1.20}    & \textbf{75.4 ± 1.03}    & \textbf{56.98 ± 1.16}    & \textbf{48.26 ± 0.65}    \\
\midrule
ResNet-32         & 75.42± 0.37           & 65.20± 0.43             & 58.01± 1.01             & 42.70± 0.20              & 34.98± 0.54              & 94.41±0.25            & 87.96± 0.24             & 82.95± 0.45             & 68.04± 0.83              & 56.5± 0.56               \\
+ This Paper      & \textbf{76.41± 0.21}  & \textbf{66.22± 0.56}    & \textbf{60.23± 0.54}    & \textbf{45.11± 0.13}     & \textbf{37.65± 0.81}     & \textbf{96.12± 0.19}  & \textbf{91.26± 0.22}    & \textbf{88.01± 0.73}    & \textbf{77.12± 1.33}     & \textbf{68.8± 1.42}  \\
\bottomrule
\end{tabular}
}
\caption{\textbf{Adding maximum separation as inductive bias on CIFAR-10 and CIFAR-100} for standard and imbalanced settings. Both the AlexNet and a ResNet architectures benefit from an embedded maximum separation, especially when imbalance increases and networks are more expressive.
}

\label{tab:error-lt-cifar}
\end{table}

\section{Analysis on open-set recognition}

We follow analysis from appendix of Vaze \etal \cite{vaze2022openset} and train the VGG-32 network for feature dimensions $D=2$ for 200 epochs. We plot features at epochs 15 and 200 in Fig~\ref{fig:osr_plot}. As training progresses, the feature norm of unknown classes is gets smaller than known classes and maximum separation helps in maintaining both the class-wise separation and lower norm of unknown classes. 

\begin{figure}[t]
   \includegraphics[width=\textwidth]{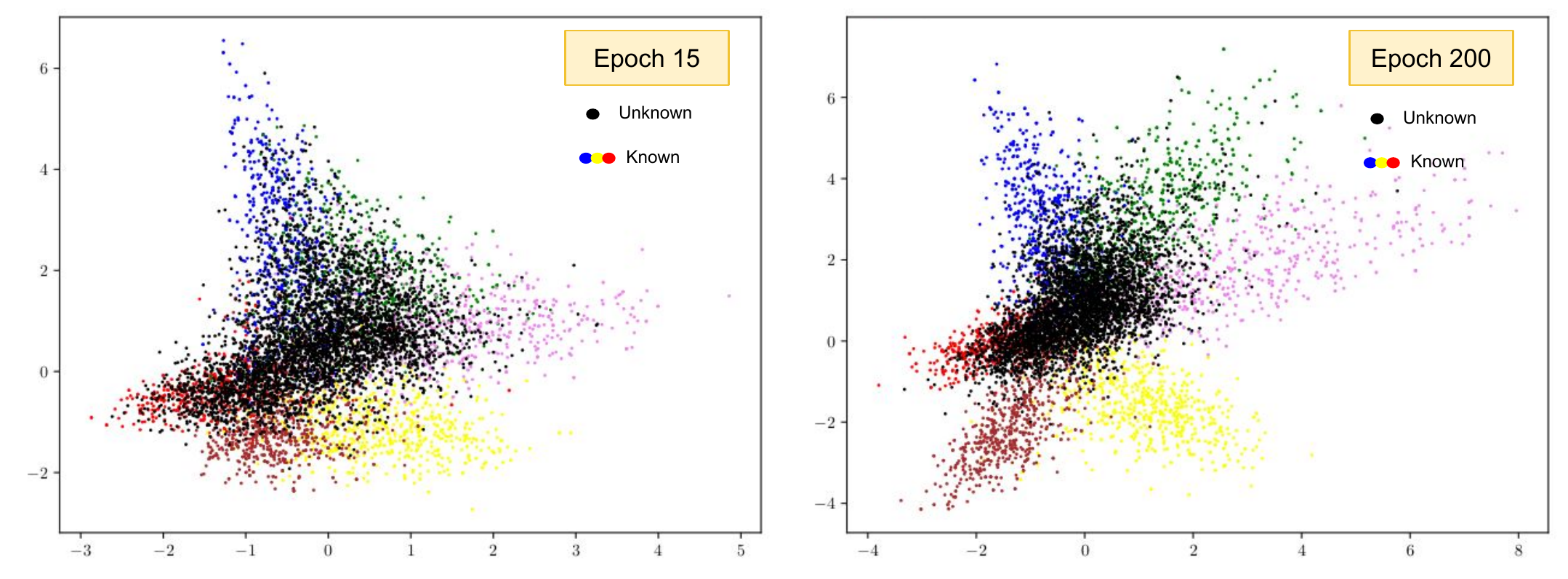}
    \caption{\textbf{OSR with maximum separation}}
    \label{fig:osr_plot}
\end{figure}


\appendix

\end{document}